\documentclass[letterpaper, 10 pt, conference]{ieeeconf}  

\IEEEoverridecommandlockouts                              

\usepackage{graphics} 
\usepackage{subfig} 
\usepackage{wrapfig}

\usepackage{amsthm} 
\usepackage{amsmath,amssymb}
\usepackage{graphicx}
\usepackage{tabularx}
\usepackage{multirow, makecell}
\usepackage{diagbox}
\usepackage{slashbox}
\usepackage{rotating}
\usepackage{cite}

\usepackage{url}
\usepackage{bm}
\usepackage[table]{xcolor}
\usepackage{hyperref}
\usepackage{siunitx} 
\usepackage{booktabs}
\usepackage{cleveref}
\usepackage{mathtools} 
\usepackage{upgreek} 

\let\labelindent\relax 
\usepackage{enumitem} 

\usepackage[ruled,vlined]{algorithm2e}

\usepackage[nolist, nohyperlinks]{acronym}
\acrodef{MPC}[MPC]{Model Predictive Control}
\acrodef{QP}[QP]{Quadratic Program}
\acrodef{CBF}[CBF]{Control Barrier Function}

\newcommand{\vx}{{\boldsymbol x}}
\newcommand{\vu}{{\boldsymbol u}}
\newcommand{\vq}{{\boldsymbol q}}

\newcommand{\vp}{{\boldsymbol p}} 
\newcommand{\vv}{{\boldsymbol v}} 
\newcommand{\vtau}{{\boldsymbol \tau}}

\newcommand{\lieder}{L}
\newcommand{\StateSpace}{\mathcal{X}}
\newcommand{\ControlSpace}{\mathcal{U}}
\newcommand{\RealSpace}{\mathbb{R}}

\newcommand{\calC}{\mathcal{C}} 
\newcommand{\calK}{\mathcal{K}} 

\newcommand{\calS}{\mathcal{S}} 

\newtheorem{definition}{Definition}
\newtheorem{theorem}{Theorem}

\theoremstyle{definition}
\newtheorem{remark}{Remark} 
\theoremstyle{definition}

\theoremstyle{definition}

\theoremstyle{definition}
\newtheorem{example}{Example}

\theoremstyle{definition}
\newtheorem{assumption}{Assumption}

\makeatother

\DeclareCaptionFont{mysize}{\fontsize{8}{9.6}\selectfont}
\title{\LARGE \bf Is Your Safe Controller Actually Safe? \\ A Critical Review of CBF Tautologies and Hidden Assumptions} 

\author{Taekyung Kim
\thanks{Department of Robotics, University of Michigan, Ann Arbor, MI, 48109, USA {\tt\footnotesize taekyung@umich.edu} }  %
}

\begin{document}
\maketitle
\thispagestyle{empty}
\pagestyle{empty}

\begin{abstract}
This tutorial provides a critical review of the practical application of Control Barrier Functions (CBFs) in robotic safety. While the theoretical foundations of CBFs are well-established, I identify a recurring gap between the mathematical assumption of a safe controller's existence and its constructive realization in systems with input constraints. I highlight the distinction between candidate and valid CBFs by analyzing the interplay of system dynamics, actuation limits, and class-$\calK$ functions. I further show that some purported demonstrations of safe robot policies or controllers are limited to passively safe systems, such as single integrators or kinematic manipulators, where safety is already inherited from the underlying physics and even naive geometric hard constraints suffice to prevent collisions. By revisiting simple low-dimensional examples, I show when CBF formulations provide valid safety guarantees and when they fail due to common misuses. I then provide practical guidelines for constructing realizable safety arguments for systems without such passive safety. A crowd-navigation simulation study further illustrates that CBF-derived reward shaping in reinforcement learning can improve empirical behavior without establishing formal safety. The goal of this tutorial is to bridge the gap between theoretical guarantees and actual implementation, supported by an open-source interactive web demonstration that visualizes these concepts intuitively.
 \href{https://github.com/tkkim-robot/safe_control}{\textcolor{red}{[Code]}} \href{https://cbf.taekyung.me}{\textcolor{red}{[Web Demo]}}   
\end{abstract}


\section*{Preliminaries}
Consider a control-affine dynamical system with state $\vx \in \StateSpace \subset \mathbb{R}^{n}$ and control input $\vu \in \ControlSpace \subset \mathbb{R}^{m}$, where $\ControlSpace$ encodes admissible actuation. The dynamics are
\begin{equation} \label{eq:dynamics}
    \dot{\vx} = f(\vx) + g(\vx)\vu.
\end{equation}
Unless stated otherwise, I assume $f$ and $g$ are locally Lipschitz on $\StateSpace$ and that $\ControlSpace$ is nonempty. Consequently, for any locally Lipschitz feedback policy $\pi : \StateSpace \to \ControlSpace$, the closed-loop system admits well-defined solutions.

I denote by $\calS$ the set of \textbf{\emph{instantaneously safe}} states, i.e., states that satisfy the hard safety specifications (e.g., geometric collision-free constraints, joint limits). A \emph{safe set} (or \emph{controlled-invariant set}) is any set $\calC \subseteq \calS$ that the closed-loop system can be kept inside for all future time.

\begin{definition}[Forward Invariance]
    A set $\calC$ is \emph{forward invariant} under the closed-loop dynamics if for every initial condition $\vx(0) \in \calC$, the solution satisfies $\vx(t) \in \calC$ for all $t \ge 0$.
\end{definition}

\section{Tautological Safety Guarantees}

A recurring failure mode in safety-critical control is a result that is mathematically correct yet algorithmically non-constructive: the safety property is effectively assumed in the hypothesis and then restated as the conclusion. A stripped-down version of this pattern can be written as follows.

\begin{assumption}[Existence of a Safe Controller]
\label{assumption:existence}
    There exists a control policy $\pi_{s} : \StateSpace \to \ControlSpace$ such that for any initial condition $\vx_{0} \in \calC$, the closed-loop trajectories satisfy $\vx(t) \in \calC$ for all $t \ge 0$.
\end{assumption}

\begin{figure}[t]
    \centering
    \includegraphics[width=0.9\columnwidth]{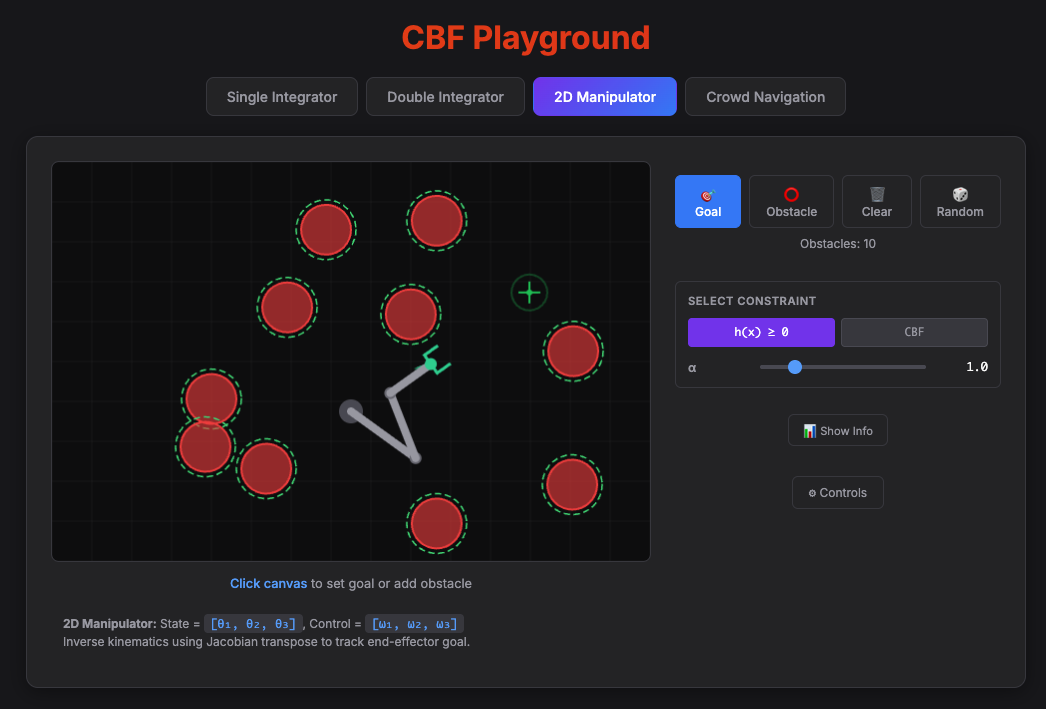}
    \caption{\url{https://cbf.taekyung.me}: interactive web demonstration of the CBF Playground. The demo contrasts \emph{passively safe} (driftless) systems (single integrator, kinematic manipulator), for which naive geometric constraints already prevent collisions, and a system with \emph{inertia} (double integrator), where feasibility depends on actuation limits, velocity bounds, and class-$\calK$ tuning.
    }
    \label{fig:online_demo}
\end{figure}

\begin{theorem}[Safety Guarantee]
\label{thm:safety-guarantee}
    Under \autoref{assumption:existence}, the closed-loop system controlled by $\pi_{s}$ renders the set $\calC$ forward invariant.
\end{theorem}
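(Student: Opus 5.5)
The plan is simply to unfold the definitions, because the theorem restates \autoref{assumption:existence} in the vocabulary of forward invariance.

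First, fix the closed-loop vector field $F(\vx) := f(\vx) + g(\vx)\pi_{s}(\vx)$ and check that it generates well-defined trajectories. The Preliminaries give this whenever $\pi_{s}$ is locally Lipschitz, since $f$ and $g$ are locally Lipschitz and products and sums of locally Lipschitz maps are locally Lipschitz. \autoref{assumption:existence} does not state Lipschitz continuity of $\pi_{s}$, but it does speak of ``the closed-loop trajectories'' from every $\vx_{0} \in \calC$ and asserts properties of them for all $t \ge 0$. I would read this as implicitly asserting that solutions exist for all forward time; I would state that reading explicitly, or add local Lipschitzness of $\pi_{s}$, to make the object in the theorem unambiguous.

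Second, take an arbitrary $\vx(0) \in \calC$ and let $\vx(\cdot)$ be the resulting closed-loop solution. By \autoref{assumption:existence} with $\vx_{0} = \vx(0)$, we get $\vx(t) \in \calC$ for all $t \ge 0$. Because $\vx(0)$ was arbitrary, this is exactly the condition in the Definition of forward invariance, and the proof closes.

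No step is technically hard. The only point needing care is the gap between ``trajectories'' in the assumption and ``the solution'' in the definition. If $\pi_{s}$ is merely measurable or discontinuous, solutions may be non-unique, so I would either interpret the assumption as quantifying over all solutions or impose local Lipschitzness to get uniqueness. Beyond that, the argument is a one-line restatement. This is presumably the paper's point: the conclusion is already contained in the hypothesis, and nothing constructive about how to obtain $\pi_{s}$ under the input constraint $\ControlSpace$ has been established.
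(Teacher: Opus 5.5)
Your proposal is correct and matches the paper's proof, which simply says the result is immediate from the definition of forward invariance. Your extra remark on well-posedness (either assuming $\pi_{s}$ is locally Lipschitz or reading the assumption as quantifying over all closed-loop solutions) is a sensible clarification that the paper leaves implicit, and it does not change the argument.
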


\begin{proof}
Immediate from the definition of forward invariance.
\end{proof}

\autoref{thm:safety-guarantee} is intentionally vacuous: it formalizes the statement ``the system is safe if there exists a controller that keeps it safe.'' The reason to spell out this tautology is that the same logical structure often reappears in modern presentations of Control Barrier Functions (CBFs) when the \emph{verification of hypotheses} is skipped. To keep the discussion precise, I restate the standard CBF definition and theorem.

\begin{definition}[CBF~\cite{ames_control_2019}] \label{def:cbf}
   Let $h: \StateSpace \to \RealSpace$ be continuously differentiable and define the set $\calC \coloneqq \{\vx \in \StateSpace \mid h(\vx) \geq 0\}$. The function $h$ is a \emph{control barrier function} (CBF) for System~\eqref{eq:dynamics} on a set $D \subseteq \StateSpace$ if there exists an extended class $\calK_{\infty}$ function $\alpha$ such that
   \begin{equation}
       \sup_{\vu \in \ControlSpace} \left[\lieder_{f} h(\vx) + \lieder_{g} h(\vx)\vu\right] \geq -\alpha\big(h(\vx)\big) \quad \forall \vx \in D,
   \label{eq:cbf-condition}
   \end{equation}
   where $\lieder_{f} h(\vx)$ and $\lieder_{g} h(\vx)$ denote the Lie derivatives of $h$ along $f$ and $g$, respectively.
\end{definition}

Given a CBF~$h$ and $\alpha$, the set of admissible control inputs at state $\vx$ is
\begin{equation}
K_{\textup{cbf}}(\vx; \alpha) \coloneqq \left\{\vu \in \ControlSpace \mid \lieder_{f} h(\vx) + \lieder_{g} h(\vx)\vu \geq -\alpha\big(h(\vx)\big) \right\}. \nonumber
\label{eq:k_cbf}
\end{equation}

\begin{theorem}[\cite{ames_control_2019}]
\label{thm:cbf-forward-invariance}
   Let $\calC = \{\vx \mid h(\vx) \ge 0\}$ with $h \in C^{1}$ and assume $\nabla h(\vx) \neq 0$ for all $\vx \in \partial \calC$. If $h$ is a CBF on a set $D$ satisfying $\calC \subseteq D \subseteq \StateSpace$, then any locally Lipschitz controller $\vu(\vx) \in K_{\textup{cbf}}(\vx;\alpha)$ for all $\vx \in D$ renders $\calC$ forward invariant for System~\eqref{eq:dynamics}.
\end{theorem}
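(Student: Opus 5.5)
The plan is to reduce forward invariance of $\calC$ to a scalar comparison argument for $h$ along closed-loop trajectories. First, since $\vu(\vx)$ is locally Lipschitz and $f,g$ are locally Lipschitz, the closed-loop vector field $F(\vx) \coloneqq f(\vx) + g(\vx)\vu(\vx)$ is locally Lipschitz on $D$. Hence, for every $\vx_0 \in \calC$, there is a unique solution $\vx(t)$ on a maximal interval $[0, t_{\max})$. I will prove $\vx(t) \in \calC$ for all $t \in [0,t_{\max})$. This is the meaning of forward invariance when solutions may not be global. If the paper intends global existence, I would add the standard compactness or boundedness hypothesis at that point.

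Next, define $y(t) \coloneqq h(\vx(t))$, which is $C^1$ because $h \in C^1$. As long as $\vx(t) \in D$, the condition $\vu(\vx) \in K_{\textup{cbf}}(\vx;\alpha)$ gives
\begin{equation*}
\dot y(t) = \lieder_f h(\vx(t)) + \lieder_g h(\vx(t))\vu(\vx(t)) \ge -\alpha\big(y(t)\big).
\end{equation*}
I then compare $y$ with the solution $z$ of the scalar initial value problem $\dot z = -\alpha(z)$, $z(0) = y(0) \ge 0$. Because $\alpha$ is extended class $\calK_\infty$, we have $\alpha(0)=0$. So $z \equiv 0$ is an equilibrium, and monotonicity of $\alpha$ keeps any solution starting at $z(0) \ge 0$ nonnegative. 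The comparison lemma would then give $y(t) \ge z(t) \ge 0$, that is, $\vx(t) \in \calC$.

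The main obstacle is the comparison step, because $\alpha$ is only assumed continuous and not Lipschitz. The scalar problem may therefore have nonunique solutions, and the textbook comparison lemma does not apply directly. I would avoid needing any particular $z$ by arguing by contradiction. Suppose $y(t_1) < 0$ for some $t_1$. Let $t_0 \coloneqq \sup\{t \le t_1 : y(t) \ge 0\}$. Then $y(t_0) = 0$, and $y < 0$ on $(t_0, t_1]$. On that interval $-\alpha(y(t)) > 0$, so $\dot y(t) \ge -\alpha(y(t)) > 0$. Thus $y$ is strictly increasing on $(t_0,t_1]$, and by continuity $y(t_1) > y(t_0) = 0$, which is a contradiction.

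This argument needs $\vx(t) \in D$ on $[t_0,t_1]$ so that the CBF inequality holds there. That is exactly where the hypothesis $\calC \subseteq D$ enters, and it needs care, because points with $y<0$ are outside $\calC$ and may fall outside $D$. I would handle this in one of two ways. If $D$ is a neighborhood of $\calC$, then continuity keeps $\vx(t) \in D$ for $t$ close to $t_0$, and I shrink $t_1$ accordingly, which still yields a contradiction. If only $D \supseteq \calC$ holds, I would instead use the regularity assumption $\nabla h \neq 0$ on $\partial\calC$ together with Nagumo's theorem. At a boundary point, the inequality $\dot h \ge -\alpha(0) = 0$ says $F(\vx)$ lies in the Bouligand tangent cone of $\calC$. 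That cone equals $\{v : \nabla h(\vx)^\top v \ge 0\}$ by the gradient condition, so Nagumo's theorem with uniqueness of solutions gives forward invariance. I expect this boundary and domain subtlety, not the computation, to be the only delicate point.
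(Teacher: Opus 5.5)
Your proof is correct. The paper gives no proof of its own and only cites Ames et al.; the standard argument behind that citation is essentially your fallback route. On $\partial\calC$ the CBF inequality reduces to $\nabla h(\vx)^\top F(\vx)\ge -\alpha(0)=0$. The hypothesis $\nabla h\neq 0$ identifies the Bouligand tangent cone with the half-space $\{v:\nabla h(\vx)^\top v\ge 0\}$. Nagumo's theorem together with uniqueness of closed-loop solutions then gives invariance. Since $\partial\calC\subseteq\calC\subseteq D$, this route alone proves the theorem exactly as stated, so your case split is not actually needed.

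Your primary first-exit argument on $y=h(\vx(t))$ is genuinely different, and in some respects stronger:
\begin{itemize}
\item It avoids the comparison lemma and any Lipschitz assumption on $\alpha$; it uses only that $\alpha(r)<0$ for $r<0$.
\item It never invokes $\nabla h\neq 0$.
\item It works trajectory by trajectory, so it needs no uniqueness of solutions; Lipschitz continuity of the controller is used only for existence.
\end{itemize}
The price is that it requires the inequality at states with $h<0$, i.e., on a neighborhood of $\calC$, which the hypothesis $\calC\subseteq D$ does not provide.

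When the inequality is known only on $\calC$ itself, the gradient condition is indispensable. Take $\dot x=-1$, $h(x)=x^3$ and $\alpha(r)=3r|r|^{-1/3}$ (with $\alpha(0)=0$). Then $\lieder_f h=-\alpha(h)$ holds on $D=\calC=[0,\infty)$, yet $\calC$ is not invariant.

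So the Nagumo argument works under the weakest domain assumption but needs the regularity and uniqueness hypotheses. Your first-exit argument needs a larger domain and in return dispenses with both.
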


The key point is \emph{where the burden of proof sits}. \autoref{thm:cbf-forward-invariance} is correct, but it is conditional on:
\begin{enumerate}
    \item showing that the proposed $h$ is \emph{actually} a CBF, i.e., that \eqref{eq:cbf-condition} holds on an appropriate domain under the \emph{true} input bounds $\ControlSpace$; and
    \item ensuring well-posedness (e.g., local Lipschitz continuity of the implemented feedback law)~\cite{agrawal_reformulations_2025}.
\end{enumerate}
If these hypotheses are assumed rather than verified, then the ``CBF safety guarantee'' collapses back to the tautology of \autoref{thm:safety-guarantee}: it is safe because I assumed the existence of a controller that keeps it safe.

To make the common gap explicit, I distinguish between a \emph{candidate} CBF and a \emph{valid} CBF.

\begin{definition}[Candidate CBF~\cite{kim_how_2025}]\label{def:candidate_cbf}
Let $\tilde{h}: \StateSpace \to \RealSpace$ be continuously differentiable and define $\tilde{\calC} \coloneqq \{\vx \in \StateSpace \mid \tilde{h}(\vx) \geq 0\}$. Any such $\tilde{h}$ with $\tilde{\calC} \subseteq \calS$ is called a \emph{candidate} CBF.
\end{definition}

In many robotics applications, a candidate $\tilde{h}$ is chosen from geometry (often, $\tilde{\calC}=\calS$). This is a reasonable starting point, but it does \emph{not} by itself justify the use of \autoref{thm:cbf-forward-invariance}. The missing step is to verify that the CBF feasibility condition holds everywhere on the claimed safe set:
\begin{equation}
K_{\textup{cbf}}(\vx;\alpha) \neq \emptyset \quad \forall \vx \in \tilde{\calC}.
\label{eq:cbf-pointwise-feas}
\end{equation}
When \eqref{eq:cbf-pointwise-feas} fails, any downstream ``CBF-QP'' controller is undefined at those states, and the formal safety claim does not apply.

\section{Defining Safety and Common Fallacies}

Before discussing how to construct valid certificates, I formalize what I mean by a \emph{safe} system and separate it from weaker notions that frequently appear in practice.

\subsection{Safety as Forward Invariance}

\begin{definition}[Safe Autonomous System]
\label{def:safe-system}
    Consider the closed-loop dynamics $\dot{\vx} = f(\vx) + g(\vx)\pi_{s}(\vx)$, where $\pi_{s}:\StateSpace\to\ControlSpace$ is locally Lipschitz. The system is a \emph{safe autonomous system} with respect to $\calC$ if $\calC$ is forward invariant under the closed-loop dynamics, i.e., for all $\vx_{0} \in \calC$, the solution satisfies $\vx(t) \in \calC$ for all $t \ge 0$.
\end{definition}

In set-based approaches, $\calC$ is typically represented as the superlevel set of a scalar function $h$. The challenge is not writing down a candidate $\tilde{h}$, but proving (or at least rigorously validating) that it satisfies the derivative inequality required for a CBF\footnote{I strictly distinguish between a \textit{candidate} CBF (an arbitrary differentiable constraint function) and a \textit{CBF} (a function verified to satisfy \eqref{eq:cbf-condition}).}.

\subsection{Distinguishing Empirical Success from Formal Safety}\label{sec:empirical_safety}

\subsubsection{Empirical validation is not safety.}
Finite-horizon experiments (in simulation or hardware) can demonstrate performance and reveal failure modes, but they do not establish forward invariance. Even a system that succeeds for $10^{6}$ trials\footnote{In practice, demonstrations often rely on even fewer trials, typically in the range $10^{1}$ to $10^{2}$.} can fail on the next rollout if the invariance property is not proven.

\subsubsection{Safety-informed is not safe.}
Safety is a \emph{hard constraint}: unsafe states are inadmissible. In contrast, many learning and optimization pipelines encode safety as a \emph{soft penalty} that can be traded off against reward or performance.

Common fallacies include~\cite{brunke_safe_2022}:
\begin{itemize}
    \item \textbf{Cost-function penalties:} Adding terms proportional to violation of an analytical safety condition to the objective (e.g., penalizing $\max\{0, -\dot{h}(\vx) - \alpha(h(\vx))\}$).
    \item \textbf{Neural CBF penalties:} Using a learned candidate barrier $V_{\theta}(\vx)$ as a penalty (e.g., $\max\{0, -\dot{V}_{\theta}(\vx) - \alpha(V_{\theta}(\vx))\}$).
    \item \textbf{RL reward shaping:} Assigning negative reward upon constraint violation. This encourages safe behavior but does not remove unsafe actions from the admissible control set.
\end{itemize}

These methods produce systems that are at best \emph{safety-informed}. Because safety can be overridden by competing objectives, they cannot, by themselves, claim the forward-invariance guarantee in \autoref{def:safe-system}.

\subsubsection{Hard constraints without feasibility.}
Encoding safety as a hard constraint in a QP/MPC/Constrained MDP does not guarantee safety unless the optimization problem remains solvable for all time. If the optimization becomes infeasible at some time $t$, the controller is undefined and the safety argument breaks. Guaranteeing feasibility \emph{for all future time} is the problem of \emph{recursive feasibility}~\cite{borrelli_predictive_2017}, and it is frequently neglected.

\subsubsection{Well-posedness is part of the safety claim.}
Even when a pointwise feasible set $K_{\textup{cbf}}(\vx;\alpha)$ exists, the safety theorem requires that the implemented feedback be well-defined (typically, locally Lipschitz) so that trajectories exist and are unique~\cite{agrawal_reformulations_2025}. In practice, many CBF controllers are implemented via parametric QPs; active-set switches, degeneracy, and numerical tolerances can introduce nonsmooth (or discontinuous) feedback laws. When these issues arise, ``safety'' becomes as much a numerical-analysis problem as a control-theoretic one.

\section{The Challenge of Recursive Feasibility}

To understand why the existence assumption in \autoref{assumption:existence} is so pernicious, it is helpful to look at a deliberately extreme counterexample.

\begin{example}[Double Integrator]
Consider the (1D) double integrator with position $p$ and velocity $v$:
\begin{equation}
    \dot{\vx} = \begin{bmatrix} \dot{p} \\ \dot{v} \end{bmatrix} = \begin{bmatrix} 0 & 1 \\ 0 & 0 \end{bmatrix}\vx + \begin{bmatrix} 0 \\ 1 \end{bmatrix}u, \qquad |u| \le 1.
\end{equation}
Let the safety constraint be a wall at $p=0$, i.e., $p(t) \ge 0$.

Now initialize at the boundary with negative velocity $\vx(0) = [0, -c]^{\top}$, where $c = 299{,}792{,}458$ m/s (the speed of light). Under \autoref{assumption:existence}, the system is ``safe.'' In reality, safety is impossible: for any admissible input $|u|\le 1$, the position satisfies $p(t) = 0 + (-c)t + \tfrac{1}{2}u t^{2} < 0$ for all sufficiently small $t>0$, so the state exits the safe set immediately.
\end{example}
The key point is structural: for a double integrator, \emph{any} initial condition on the boundary with $v(0)<0$ violates $p(t)\ge 0$ instantaneously.

Equivalently, the geometric set $\{(p,v)\mid p\ge 0\}$ is \emph{not} controlled invariant under bounded acceleration. The maximal controlled-invariant subset is strictly smaller and couples position and velocity. For the 1D double integrator with $|u|\le u_{\max}$ and constraint $p\ge 0$, a classical calculation yields the viability condition $p \ge v^{2}/(2u_{\max})$ when $v<0$, i.e., $v \ge -\sqrt{2u_{\max}p}$.

This is where the appeal of CBFs originates: a valid CBF condition provides a local infinitesimal certificate of forward invariance. However, if one does not prove that a proposed $\tilde{h}$ actually satisfies \eqref{eq:cbf-condition} under the true $\ControlSpace$ on the claimed domain, then the ``safety guarantee'' remains an unverified hypothesis.

\section{Constructing Safety: Tuning the Inequality \label{sec:tuning}}

Having established that \emph{assuming} the existence of a CBF is logically empty, I now turn to the practical question: how does one verify that a candidate function is a valid CBF for a given system and input limits?

Recall the CBF feasibility condition:
\begin{equation}
  \sup_{\vu \in \ControlSpace} \left[ \lieder_{f} h(\vx) + \lieder_{g} h(\vx)\vu \right] \geq -\alpha\big(h(\vx)\big).
\end{equation}
To reason about feasibility, it is useful to separate the inequality into its core design/plant components:
\begin{enumerate}
 \item the dynamics $(f, g)$,
 \item the input bounds ($\ControlSpace$),
 \item the constraint representation ($h$),
 \item the class-$\calK$ function ($\alpha$),
 \item the domain on which safety is claimed.
\end{enumerate}

\subsection{Fixed Components: Dynamics and Actuation}
The first two components, the dynamics $(f, g)$ and the admissible input set $\ControlSpace$, are physical properties of the plant (up to modeling error). They determine the maximum authority available to counter unsafe drift. Relaxing $\ControlSpace$ (e.g., pretending $\ControlSpace=\mathbb{R}^{m}$) can make \eqref{eq:cbf-condition} appear feasible in theory, but yields controllers that demand physically impossible actuation.

\subsection{Choosing the Set via $h$}
The choice of $h$ defines the geometry of $\calC$. When a hand-designed candidate fails \eqref{eq:cbf-condition}, it often indicates that the intended set is too large: it contains states from which recovery is impossible under $\ControlSpace$.

In low-dimensional settings, constructive methods can compute the maximal safe set. Hamilton--Jacobi reachability can compute the viability kernel~\cite{altarovici_general_2013, choi_robust_2021}, and Sum-of-Squares (SOS) programming can provide polynomial inner approximations~\cite{parrilo_semidefinite_2003, xu_correctness_2018}. In higher dimensions, practitioners often rely on heuristic candidates and then tune the remaining ingredients.

\subsection{Tuning Class-$\calK$ Functions}
The function $\alpha$ dictates the allowable rate of approach to the safety boundary. Tuning $\alpha$ presents a fundamental trade-off~\cite{parwana_ratetunable_2025, kim_learning_2025, kim_how_2025}:
\begin{itemize}
    \item \textbf{Aggressive Tuning:} If $\alpha$ is steep (e.g., $\alpha(h) = \gamma h$ with large $\gamma > 0$), the controller allows the system to approach the boundary rapidly. However, stopping at the last moment requires significant control authority. This makes the CBF condition \emph{harder} to satisfy, as the required $\vu$ might lie outside $\ControlSpace$.
  \item \textbf{Conservative Tuning:} If $\alpha$ is shallow, the system is forced to slow down far from the unsafe set. This lowers the required control effort, making the CBF condition \emph{easier} to satisfy. However, this results in overly conservative behavior where the robot refuses to move near obstacles even when it is safe to do so.
\end{itemize}

\subsection{Restricting the Operating Domain}
Finally, feasibility can be recovered by restricting the domain on which safety is claimed. For systems with inertia, the CBF inequality often fails on unbounded state spaces because the drift can be arbitrarily unsafe.

\begin{example}[Bounded-Velocity Double Integrator]
Revisit the double integrator with $\vx = [p, v]^{\top}$ and the geometric constraint $p \ge 0$. A naive candidate barrier is $h(\vx)=p$. Since $\lieder_{g}h(\vx)=0$, the standard first-order CBF condition is ill-posed (relative degree two), and one must use a high-order construction so that the input appears.

If the domain is unbounded ($\StateSpace=\mathbb{R}^{2}$), then $v$ can be arbitrarily negative, corresponding to arbitrarily large unsafe drift toward the wall. Under bounded actuation, no certificate can make the set $\{p\ge 0\}$ invariant on $\mathbb{R}^{2}$. However, if one restricts the domain by enforcing $v \in [v_{\min}, v_{\max}]$ (or, more tightly, $v \ge -\sqrt{2u_{\max}p}$), then feasibility can be recovered on the restricted domain.
\end{example}

Well-illustrated examples of domain restriction to ensure feasibility can be found in \cite{park_collision_2026}.

\subsection{Using CBFs Inside Planning}
Unlike purely reactive controllers, global planners can search the configuration space for trajectories that avoid regions where the CBF inequality is infeasible. One strategy is to use the CBF condition as a validity check within a planner.

For example, \cite{kim_visibilityaware_2025} uses a CBF inequality to filter sampled states or edges during tree expansion; if a sample violates the CBF condition, it is excluded. This biases the planner toward trajectories that are certifiable under the downstream controller (assuming tracking can be performed with sufficiently small error). The trade-off is completeness: if the CBF is conservative or improperly tuned, the planner may report ``no solution'' even when a collision-free path exists in the purely geometric free space.

\section{When Safety is Structurally Trivial: Passively Safe Systems}

To interpret many empirical ``safety'' demonstrations, it is useful to identify systems for which safety is structurally guaranteed by physics/modeling, rendering sophisticated certification superfluous.

\subsection{Passively Safe Systems}

\begin{definition}[Passively Safe System]
\label{def:passive-safe}
A control-affine system $\dot{\vx} = f(\vx) + g(\vx)\vu$ is \emph{passively safe} with respect to a set $\calC$ if $\calC$ is forward invariant under the \emph{unforced} dynamics $\dot{\vx}=f(\vx)$.\footnote{The notion that a system remains safe under zero input is widely referred to as \emph{passive safety}~\cite{breger_safe_2012, starek_sampling_2015}.}
\end{definition}

For sets represented as $\calC = \{\vx \mid h(\vx)\ge 0\}$ with $h\in C^{1}$, a sufficient condition for passive safety is
\begin{equation}
    \lieder_{f}h(\vx) \ge 0 \quad \forall \vx \in \calC .
\end{equation}

\begin{theorem}[Trivial Safety]
\label{thm:trivial-safety}
    If the system is passively safe with respect to $\calC$ and the admissible control set contains the origin ($\mathbf{0} \in \ControlSpace$), then $\calC$ is forward invariant under the feedback $\vu(\vx) \equiv \mathbf{0}$.
\end{theorem}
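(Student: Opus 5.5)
The plan is to observe that substituting the constant feedback $\vu(\vx)\equiv\mathbf{0}$ into System~\eqref{eq:dynamics} makes the closed loop coincide with the unforced dynamics. Definition~\ref{def:passive-safe} then delivers forward invariance directly. Concretely, I would carry out the following steps in order.

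First, I would check admissibility and well-posedness. By hypothesis $\mathbf{0}\in\ControlSpace$, so $\pi(\vx)\equiv\mathbf{0}$ is a map $\StateSpace\to\ControlSpace$. Being constant, it is trivially locally Lipschitz, so the standing assumption in the Preliminaries gives well-defined closed-loop solutions. The closed-loop vector field is $f(\vx)+g(\vx)\mathbf{0}=f(\vx)$, which is locally Lipschitz; hence solutions from any $\vx_0$ exist and are unique on their maximal interval.

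Second, I would identify trajectories. Since the closed-loop vector field equals $f$ identically, every closed-loop solution from $\vx_0\in\calC$ is exactly the solution of $\dot{\vx}=f(\vx)$ from $\vx_0$. Passive safety (Definition~\ref{def:passive-safe}) states that $\calC$ is forward invariant under $\dot{\vx}=f(\vx)$, so $\vx(t)\in\calC$ for all $t\ge0$. This is precisely forward invariance of $\calC$ under the feedback $\vu\equiv\mathbf{0}$, and it also means the system is a safe autonomous system in the sense of Definition~\ref{def:safe-system} with $\pi_s\equiv\mathbf{0}$.

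The only delicate point is the identification of trajectories, which relies on uniqueness. Uniqueness is exactly what local Lipschitzness of $f$ provides. I also need the phrase ``for all $t\ge0$'' to be read the same way as in the passive-safety definition, namely on the existence interval of the solution. If one instead starts from the sufficient condition $\lieder_f h\ge0$ on $\calC$, there is an extra step. Along solutions, $\dot h=\lieder_f h(\vx)\ge0$ while $\vx\in\calC$, so $h$ is nondecreasing. To rule out exit from $\calC$, I would use a first-exit-time argument, or invoke Theorem~\ref{thm:cbf-forward-invariance} with $\alpha\equiv0$-type reasoning, noting that $\mathbf{0}\in K_{\textup{cbf}}(\vx;\alpha)$ for any extended class-$\calK_\infty$ $\alpha$ on $\calC$. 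That variant is not needed for the theorem as stated, since passive safety is defined directly as forward invariance.
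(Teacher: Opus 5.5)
Your proof is correct and is exactly the paper's argument: with $\vu\equiv\mathbf{0}$ (admissible because $\mathbf{0}\in\ControlSpace$) the closed loop is literally $\dot{\vx}=f(\vx)$, so forward invariance of $\calC$ is just the definition of passive safety. Your well-posedness remarks are harmless extras, and the identification of trajectories does not even need uniqueness, since the two ODEs coincide. Your aside on the sufficient condition $\lieder_{f}h\ge 0$ on $\calC$ is not needed, and as sketched it would not close by itself: a bare first-exit-time argument stalls when $\dot h=0$ at the exit time. To make it rigorous you would need the regularity hypothesis $\nabla h\neq 0$ on $\partial\calC$, via Nagumo's theorem or the CBF forward-invariance theorem.
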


\begin{proof}
Choose $\vu(\vx) \equiv \mathbf{0}$. The closed-loop dynamics reduce to $\dot{\vx}=f(\vx)$, under which $\calC$ is forward invariant by \autoref{def:passive-safe}.
\end{proof}

\subsection{Driftless Systems and Relative Degree}

A prominent subclass of passively safe systems is the class of \emph{driftless} systems, where $f(\vx)=\mathbf{0}$. In this case, the unforced dynamics are stationary, so any set $\calC$ is forward invariant under $\vu\equiv\mathbf{0}$.

\begin{example}[$n$-Dimensional Single Integrator]
Consider the single integrator $\dot{\vx} = \vu$ with $\vx,\vu \in \RealSpace^{n}$. Here, $f(\vx)=\mathbf{0}$ and $g(\vx)=I_{n}$. For any differentiable constraint function $h$,
\begin{align}
    \dot{h}(\vx,\vu)
    &= \frac{\partial h}{\partial \vx}(f(\vx)+g(\vx)\vu) 
    = \underbrace{\frac{\partial h}{\partial \vx}f(\vx)}_{\lieder_{f}h(\vx)=0} + \lieder_{g}h(\vx)\vu. \nonumber
\end{align}
Since $\lieder_{f}h(\vx)=0$, the unforced dynamics never decrease $h$.
\end{example}

\begin{example}[Kinematic Manipulators]
Consider a robotic manipulator with configuration $\vq \in \mathbb{R}^{n}$ and Euler--Lagrange dynamics
\begin{equation}
M(\vq)\ddot{\vq} + C(\vq, \dot{\vq})\dot{\vq} + G(\vq) = \vtau.
\end{equation}
In many manipulation stacks, a high-rate low-level controller tracks a commanded joint velocity $\vu \in \mathbb{R}^{n}$, yielding the kinematic model
\begin{equation}
\dot{\vq} = \vu.
\end{equation}
Defining the state as $\vx=\vq$, this is driftless: $f(\vx)=\mathbf{0}$, $g(\vx)=I_{n}$. Hence it is passively safe under the kinematic abstraction.
\end{example}

\begin{remark}[Dynamic Environments]
Even for driftless robots, safety becomes nontrivial when the safe set is time-varying (dynamic obstacles). If $h(\vx,t)$ defines $\calC(t)$, then
\begin{equation}
    \dot{h}(\vx,\vu,t) = \frac{\partial h}{\partial \vx}g(\vx)\vu + \frac{\partial h}{\partial t}.
\end{equation}
The term $\tfrac{\partial h}{\partial t}$ acts as an uncontrollable \emph{virtual drift}. Safety then depends on explicit assumptions on the environment (e.g., bounded obstacle speed) and on whether the available control authority can overcome this drift.
\end{remark}

\begin{remark}[The Trap of Inertia]
Systems with relative degree $\ge 2$, such as the double integrator $\ddot{\vp}=\vu$, are sometimes informally described as ``simple'' because the input enters linearly. However, in state space $\vx=[\vp,\vv]^{\top}$ with $\dot{\vp}=\vv$ and $\dot{\vv}=\vu$, the drift term is $f(\vx)=[\vv,\mathbf{0}]^{\top}\neq \mathbf{0}$. If $h$ depends only on position $\vp$, then $\lieder_{f}h(\vx)$ depends on velocity $\vv$ and can be arbitrarily negative on an unbounded domain, making feasibility nontrivial (or impossible) under bounded actuation.
\end{remark}

\begin{example}[Artificial Potential Fields]
Artificial Potential Field (APF) methods are classically known to provide collision avoidance for driftless point robots. Let $\phi: \StateSpace \to \RealSpace$ be a potential function that diverges near obstacle boundaries, and apply $\vu=-\nabla\phi(\vx)$. For the driftless model $\dot{\vx}=\vu$, the control directly shapes the state velocity, enabling repulsion from obstacles~\cite{singletary_comparative_2021}. This intuition does \emph{not} transfer unchanged to systems with inertia or input saturation, where $\vu$ cannot instantaneously realize the desired state velocity.
\end{example}

\begin{remark}
When the safety output has relative degree $>1$, $\lieder_{g} h(\vx)=0$ and the first-order inequality is ill-posed. I refer interested readers to High-Order CBFs (HOCBF)~\cite{xiao_control_2019} and Input-Constrained CBFs (ICCBF)~\cite{agrawal_safe_2021}. For discrete-time counterparts, see \cite{xiong_discretetime_2023} for HOCBF and \cite{kim_learning_2025} for ICCBF.
\end{remark}

\section{Practical Guidelines for Safety Arguments}

The discussion above suggests a simple checklist that helps prevent over-claiming safety.

\subsection{A Common Generalization Gap}

\begin{example}[A Flawed Safety Argument]
A high-level reduction of many flawed safety arguments is:
\begin{itemize}
    \item \textbf{Step 1 (Theory):} State results for a general control-affine model $\dot{\vx} = f(\vx) + g(\vx)\vu$.
    \item \textbf{Step 2 (Certificate):} Propose a geometric candidate $\tilde{h}(\vx)=\|\vx-\vx_{\textup{obs}}\|^{2}-R^{2}$.
    \item \textbf{Step 3 (Controller):} Implement a CBF-QP using $\tilde{h}$.
    \item \textbf{Step 4 (Claim):} Conclude ``therefore, our system is safe.''
\end{itemize}
Then, in experiments: ``we model our robot as a single integrator (or kinematic manipulator).''
\end{example}

\textit{Critique:} The candidate $\tilde{h}$ is typically only a valid certificate for driftless or relative-degree-one abstractions. If the experimental model is passively safe while the claimed theory targets a general inertial system, then the reported ``safety'' does not validate the advertised claim.

\subsection{Best Practices}

\begin{itemize}
    \item \textbf{State the safety claim precisely:} Specify the set $\calC$, the domain on which safety is claimed, and whether safety is continuous-time or sampled-data.
    \item \textbf{Separate candidates from certificates:} If a function is only heuristically motivated, call it a candidate and avoid invoking \autoref{thm:cbf-forward-invariance}.
    \item \textbf{Check feasibility under input bounds:} Explicitly argue (or compute) that $K_{\textup{cbf}}(\vx;\alpha)$ is nonempty on the relevant domain.
\end{itemize}

\section{Simulation Results}
\label{sec:simulation}

To empirically support the discussion, I present simulations on three dynamical systems: (i) single integrator, (ii) double integrator, and (iii) 3-link planar manipulator under a kinematic abstraction. I compare two constraint realizations in a QP-based controller:
\begin{enumerate}
    \item \textbf{CBF-QP:} The standard formulation enforcing $\dot{h}(\vx) \ge -\alpha\big(h(\vx)\big)$ (relative degree one) or a high-order equivalent for the double integrator.
    \item \textbf{Naive hard constraint:} A baseline enforcing a discrete-time forward prediction constraint. For relative degree one: $h(\vx_{k+1}) \ge 0$. For relative degree two: $h(\vx_{k+2}) \ge 0$ using a second-order Taylor approximation. The objective function is identical to that of the CBF-QP controller.
\end{enumerate}

\subsection{Experimental Setup}

\subsubsection{Implementation Details}
All simulations are implemented in Python using a discrete-time simulation with time step $\Delta t = 0.01$ s. The QP-based controllers are solved using \texttt{cvxpy}~\cite{diamond_cvxpy_2016} with the Gurobi solver. 
\footnote{Source code is available at \url{https://github.com/tkkim-robot/safe_control}.}

\begin{figure*}[t]
    \centering
    \subfloat[Single Integrator]{\includegraphics[width=0.32\textwidth]{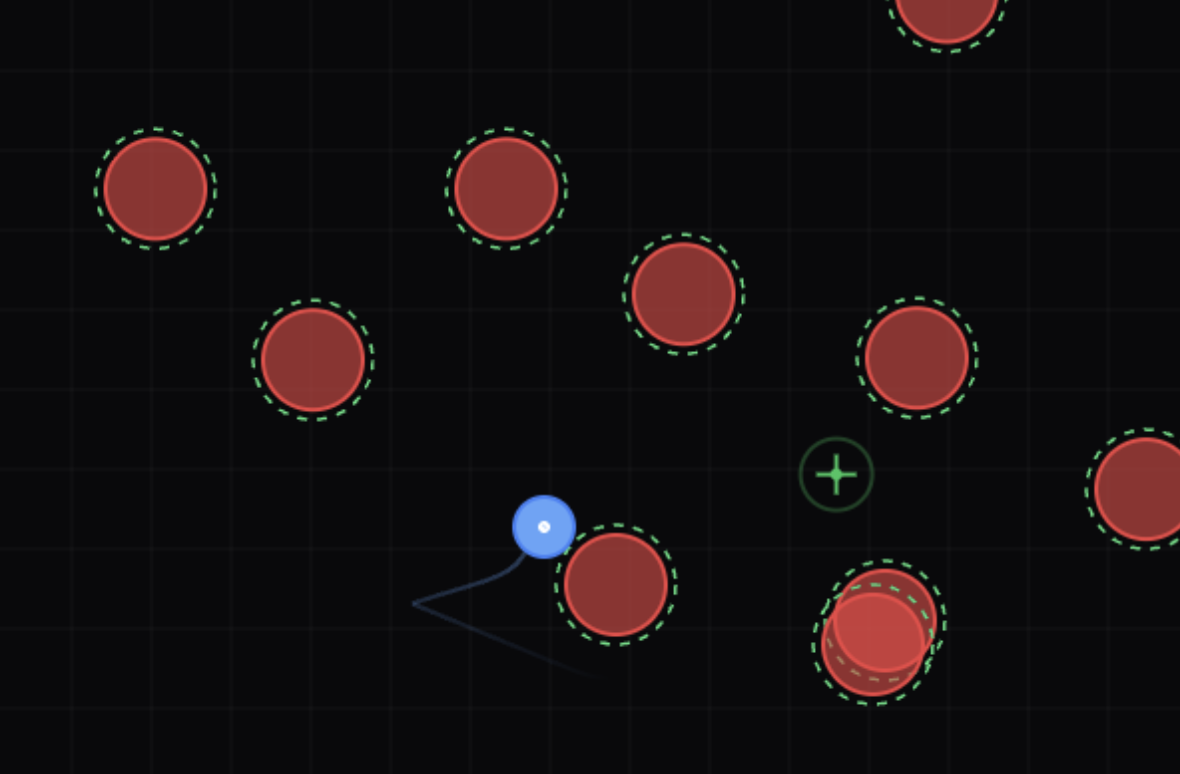}}\hfill
    \subfloat[Double Integrator]{\includegraphics[width=0.32\textwidth]{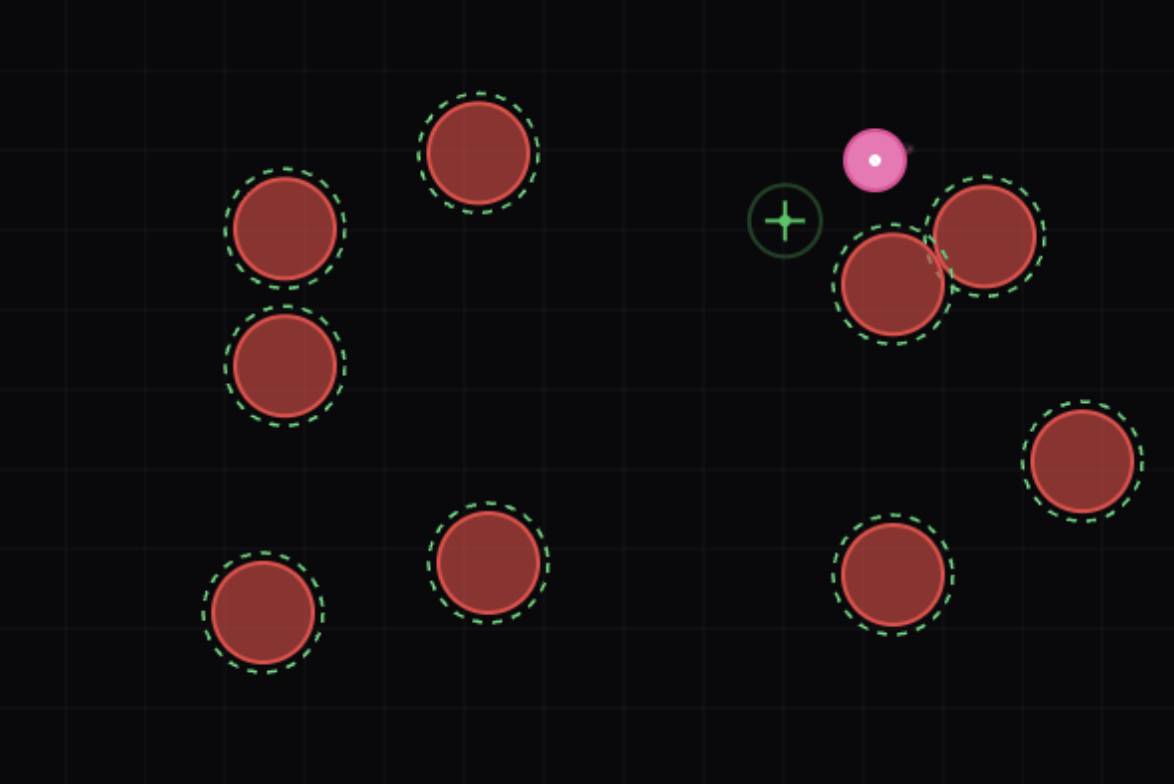}}\hfill
    \subfloat[Planar Manipulator]{\includegraphics[width=0.32\textwidth]{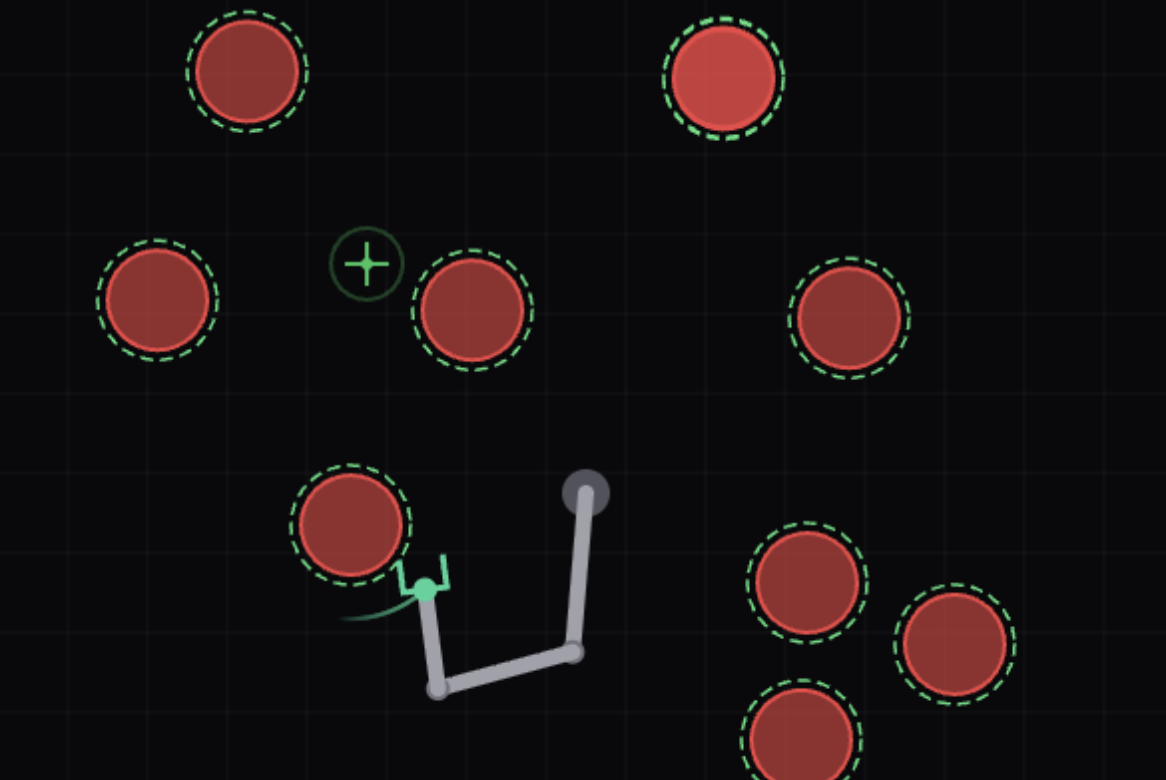}}
    \caption{Snapshots of simulation trials. Red circles are obstacles and the green cross is the goal. Interested readers may evaluate the simulation framework via the provided live web demo.}
    \label{fig:experiments}
\end{figure*}

\subsubsection{Environment Configuration}
The simulations are conducted in a 10 m $\times$ 10 m workspace. For each trial:
\begin{itemize}
    \item \textbf{Obstacles:} 10 circular obstacles are randomly placed, each with radius $R_{\text{obs}} = 0.4$ m. Obstacles are validated to ensure they do not overlap with the robot's initial configuration.
    \item \textbf{Robot Radius:} The robot is modeled as a disk with radius $R_{\text{robot}} = 0.25$ m for collision checking.
    \item \textbf{Trials:} 100 randomized trials per configuration, with different initial positions and goal locations.
\end{itemize}

\subsubsection{System Parameters}
\begin{itemize}
    \item \textbf{Single Integrator:} Maximum velocity $v_{\max} = 5.0$ m/s. Linear class-$\calK$ gains $\alpha \in \{1.0, 2.0, 3.0, 4.0, 5.0\}$.
    \item \textbf{Double Integrator:} Maximum acceleration $a_{\max} = 5.0$ m/s$^{2}$. Grid over $v_{\max} \in \{1.0, 2.0, 3.0, 4.0, 5.0\}$ m/s and HOCBF gains $\alpha_{1}=\alpha_{2} \in \{1.0, 2.0, 3.0, 4.0, 5.0\}$.
    \item \textbf{3-DOF Manipulator:} Link lengths $L = [1.33, 1.16, 0.83]$ m. Maximum joint velocity $\omega_{\max} = 2.0$ rad/s. Linear class-$\calK$ gains $\alpha \in \{1.0, 2.0, 3.0, 4.0, 5.0\}$.
\end{itemize}

\subsection{Results and Discussion}

\subsubsection{Passively Safe Systems}
Consistent with \autoref{thm:trivial-safety}, both the single integrator and the kinematic manipulator exhibit \emph{0\% collisions and 0\% infeasibility} across all 100 trials for \emph{all} constraint types and parameter values tested. Under these driftless abstractions, $\vu=\mathbf{0}$ keeps the state fixed, so \textbf{\emph{even naive hard constraints appear successful}}.

\begin{table}[t]
\caption{Passively safe systems (100 trials each)}
\label{tab:results_safe}
\centering
\begin{tabular}{lcc}
\toprule
\textbf{System / constraint family} & \textbf{Col. Rate} & \textbf{Inf. Rate} \\
\midrule
Single Integrator: CBF ($\alpha \in \{1,\dots,5\}$) & 0\% & 0\% \\
Single Integrator: Naive hard constraint & 0\% & 0\% \\
3-DOF Manipulator: CBF ($\alpha \in \{1,\dots,5\}$) & 0\% & 0\% \\
3-DOF Manipulator: Naive hard constraint & 0\% & 0\% \\
\bottomrule
\end{tabular}
\end{table}

\subsubsection{Failure of the Naive Hard Constraint for Inertial Systems}
For the double integrator, the naive prediction constraint fails catastrophically, with collision rates of 87--93\% across all tested velocity bounds (\autoref{tab:double_hard}). This highlights the core message: geometric feasibility at the next step does not imply forward invariance for inertial dynamics.

\begin{table}[t]
\caption{Double integrator with naive hard constraint (100 trials each)}
\label{tab:double_hard}
\centering
\begin{tabular}{ccc}
\toprule
\textbf{$v_{\max}$} (m/s) & \textbf{Col. Rate} & \textbf{Inf. Rate} \\
\midrule
1.0 & 87\% & 0\% \\
2.0 & 91\% & 0\% \\
3.0 & 93\% & 0\% \\
4.0 & 90\% & 0\% \\
5.0 & 88\% & 0\% \\
\bottomrule
\end{tabular}
\end{table}

\subsubsection{Tuning CBFs Involves Trade-offs}
The CBF-QP controller exhibits a clear trade-off between allowed speed (system aggressiveness) and barrier tuning. As shown in \autoref{tab:double_grid}, conservative settings achieve 0\% collisions in our experiments, while more aggressive settings lead to higher collision rates and occasional infeasibility.

\begin{table}[t]
\caption{Double integrator: collision rate (\%) / infeasibility rate (\%) for different HOCBF gains}
\label{tab:double_grid}
\centering
\resizebox{\linewidth}{!}{%
\begin{tabular}{c cc cc cc cc cc}
\toprule
\multirow{2}{*}{$\alpha_{1}=\alpha_{2}$}
& \multicolumn{2}{c}{$v_{\max}=1$}
& \multicolumn{2}{c}{$v_{\max}=2$}
& \multicolumn{2}{c}{$v_{\max}=3$}
& \multicolumn{2}{c}{$v_{\max}=4$}
& \multicolumn{2}{c}{$v_{\max}=5$} \\
\cmidrule(lr){2-3}\cmidrule(lr){4-5}\cmidrule(lr){6-7}\cmidrule(lr){8-9}\cmidrule(lr){10-11}
& Col. & Inf. & Col. & Inf. & Col. & Inf. & Col. & Inf. & Col. & Inf. \\
\midrule
1.0 & 0 & 0 & 0 & 0 & 0 & 0 & 0 & 0 & 0 & 0 \\
2.0 & 0 & 0 & 0 & 0 & 0 & 0 & 0 & 0 & 0 & 0 \\
3.0 & 0 & 0 & 31 & 0 & 33 & 4 & 32 & 6 & 30 & 5 \\
4.0 & 17 & 0 & 68 & 1 & 68 & 4 & 63 & 5 & 60 & 8 \\
5.0 & 50 & 1 & 80 & 1 & 82 & 2 & 77 & 2 & 74 & 7 \\
\bottomrule
\end{tabular}
}
\end{table}

\textbf{Key observations:}
\begin{itemize}
    \item For $\alpha_{1}=\alpha_{2} \le 2$, the CBF-QP achieves \textbf{0\% collision and 0\% infeasibility} across all tested $v_{\max}$.
    \item Larger gains combined with larger velocities lead to significant collision rates (up to 82\%) and occasional infeasibility (up to 8\%).
\end{itemize}

\subsubsection{Discussion: Why ``Easy'' Systems are Misleading}
These results demonstrate that for passively safe abstractions (e.g., single integrators and kinematic manipulators), ``safety'' is often trivial: even simplistic geometric constraints appear to work because the drift does not push the system toward constraint violation. In contrast, for inertial systems like the double integrator, safety depends critically on feasibility under input bounds and on how the certificate couples position with momentum. Our experiments do \emph{not} prove that a given tuning is valid when collision count is zero; they only show that when collisions occur, the tuned certificate/controller is demonstrably invalid for the tested conditions.

The following simulation study examines the same distinction in a learning-based crowd navigation setting, where a CBF-derived reward improves empirical behavior but does not certify the learned policy.

\section{Crowd-Navigation Simulation Study}
\label{sec:crowd_navigation}

To supplement the distinction in \autoref{sec:empirical_safety}, I present a crowd-navigation comparison. This study demonstrates that CBF-based reward shaping may improve observed behavior without providing a formal safety certificate for the learned policy.

\subsection{Experimental Setup}

Two PPO-GNN policies are trained. The \emph{Base} policy follows the general state and reward construction used in CrowdNav~\cite{chen_crowdrobot_2019}, with a GNN encoder used to aggregate observations from multiple obstacles~\cite{kim_learning_2025a}. The \emph{CBF-RL} policy uses the same training procedure and adds a CBF-derived auxiliary reward motivated by~\cite{yang_cbfrl_2026}. Each trained policy is evaluated both directly and through the same inference-time CBF-QP, yielding four deployment conditions: Base, CBF-RL, Base + CBF-QP, and CBF-RL + CBF-QP.

The robot is a planar single integrator with control interval $\Delta t=0.1$ s and is evaluated among 10-20 dynamic obstacles. Each condition and crowd size uses 500 held-out episodes.

\begin{figure*}[t]
    \centering
    \includegraphics[width=0.98\textwidth]{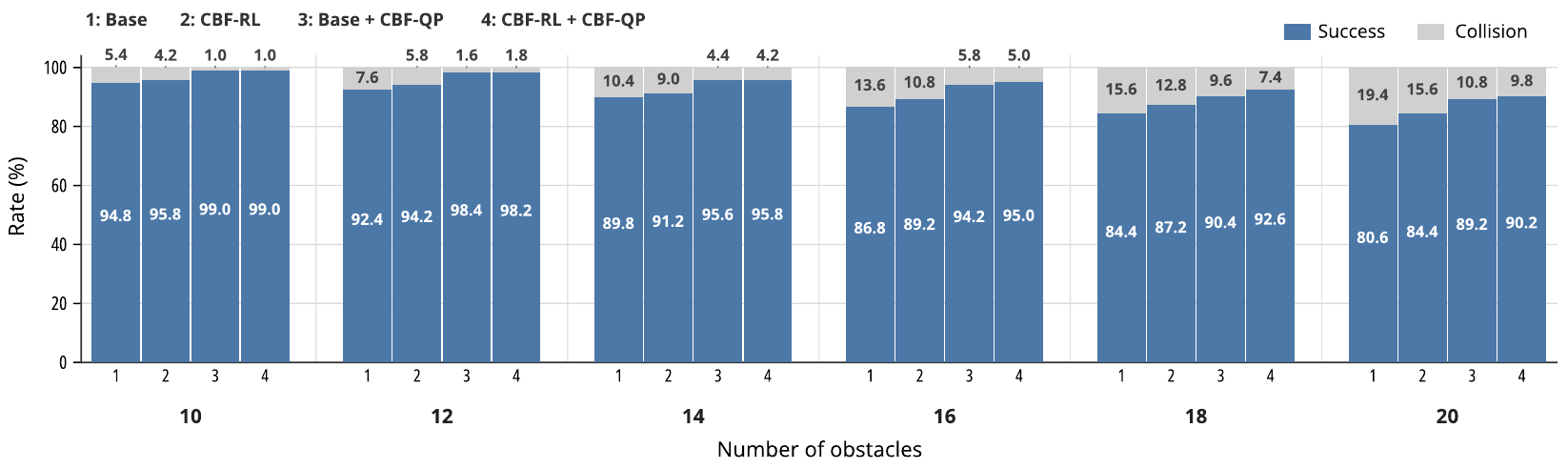}
    \caption{Empirical success and collision rates versus crowd size. Base and CBF-RL are evaluated with and without the same inference-time CBF-QP over 500 held-out episodes per setting; ``obstacles'' denotes active dynamic obstacles. Success and collision are independently recorded.}
    \label{fig:crowd_nav_results}
\end{figure*}

\subsection{Results and Discussion}

As shown in \autoref{fig:crowd_nav_results}, all four conditions degrade as crowd size increases. Without inference-time filtering, CBF-RL gives a modest point-estimate improvement over Base. At 20 obstacles, success increases from 80.6\% to 84.4\%, while collision decreases from 19.4\% to 15.6\%. The larger empirical change appears after applying the CBF-QP: Base + CBF-QP records 89.2\% success and 10.8\% collision, while CBF-RL + CBF-QP records 90.2\% success and 9.8\% collision.

The CBF reward is therefore \emph{safety-informed}, not \emph{safety-critical}. It penalizes barrier-condition violation during training, but it does not remove unsafe actions from the learned policy at inference. The QP provides the larger empirical collision reduction, yet nonzero collisions remain under both filtered conditions. This is consistent with the discussion above: the implemented QP with multiple simultaneous constraints can compete or become incompatible under bounded actuation, and sampled-data execution introduces model mismatch, numerical tolerances, and discretization effects. 

Interested readers may interactively compare the four deployment conditions through the web demo shown in \autoref{fig:crowd_nav_demo}.

\begin{figure}[t]
    \centering
    \includegraphics[width=0.9\columnwidth]{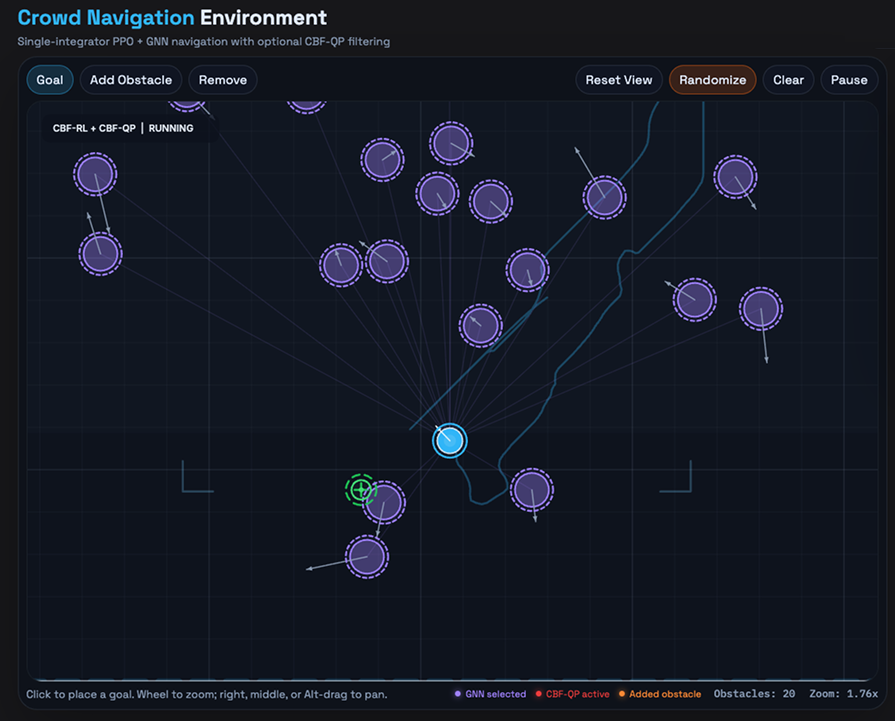}
    \caption{\url{https://cbf.taekyung.me}: interactive crowd-navigation demo for comparing the four deployment conditions.}
    \label{fig:crowd_nav_demo}
\end{figure}

\section{Conclusion}
This tutorial paper highlights a specific but common pitfall in safety-critical robotics: safety theorems are often invoked without verifying their hypotheses, yielding tautological guarantees. I emphasized (i) the distinction between candidate and valid CBFs, (ii) the central role of feasibility under true actuation limits, and (iii) the fact that safety demonstrations on passively safe (driftless) abstractions provide limited evidence for general robotic systems. The crowd-navigation study further shows that CBF-derived reward shaping can improve observed success and reduce collision rates, but cannot guarantee forward invariance without an inference-time safety filter. I advocate for future work that explicitly constructs or verifies controlled-invariant safe sets rather than implicitly assuming their existence.

\addtolength{\textheight}{0 cm}   




\bibliographystyle{IEEEtran}
\typeout{}
\bibliography{references.bib}

@inproceedings{yang_cbfrl_2026,
	title = {{CBF}-{RL}: {Safety} {Filtering} {Reinforcement} {Learning} in {Training} with {Control} {Barrier} {Functions}},
	shorttitle = {{CBF}-{RL}},
	doi = {10.48550/arXiv.2510.14959},
	urldate = {2025-10-17},
	booktitle = {{IEEE} {International} {Conference} on {Robotics} and {Automation} ({ICRA})},
	author = {Yang, Lizhi and Werner, Blake and Ames, Massimiliano de Sa Aaron D.},
	year = {2026},
}

@inproceedings{kim_learning_2025a,
	title = {Learning to {Adapt} {Control} {Barrier} {Functions} {Under} {Epistemic} and {Aleatoric} {Uncertainty}},
	language = {en},
	urldate = {2026-06-05},
	booktitle = {{arXiv} preprint {arXiv}.2504.03038},
	author = {Kim, Taekyung and Kee, Robin Inho and Panagou, Dimitra},
	year = {2026},
}

@inproceedings{parwana_ratetunable_2025,
	title = {Rate-{Tunable} {Control} {Barrier} {Functions}: {Methods} and {Algorithms} for {Online} {Adaptation}},
	issn = {2378-5861},
	shorttitle = {Rate-{Tunable} {Control} {Barrier} {Functions}},
	doi = {10.23919/ACC63710.2025.11108110},
	urldate = {2026-04-20},
	booktitle = {American {Control} {Conference} ({ACC})},
	author = {Parwana, Hardik and Panagou, Dimitra},
	year = {2025},
	pages = {275--282},
}

@article{breger_safe_2012,
	title = {Safe {Trajectories} for {Autonomous} {Rendezvous} of {Spacecraft}},
	doi = {10.2514/1.29590},
	language = {en},
	urldate = {2026-01-21},
	journal = {Journal of Guidance, Control, and Dynamics},
	author = {Breger, Louis and How, Jonathan P.},
	year = {2012},
}

@inproceedings{starek_sampling_2015,
	title = {A {Sampling} {Based} {Approach} to {Spacecraft} {Autonomous} {Maneuvering} with {Safety} {Specifications}},
	urldate = {2026-01-21},
	booktitle = {Annual {AAS} {Guidance} \& {Control} {Conference}},
	author = {Starek, Joseph A. and Barbee, Brent W. and Pavone, Marco},
	year = {2015},
}

@article{diamond_cvxpy_2016,
	title = {{CVXPY}: {A} {Python}-{Embedded} {Modeling} {Language} for {Convex} {Optimization}},
	language = {en},
	journal = {Journal of Machine Learning Research},
	author = {Diamond, Steven and Boyd, Stephen},
	year = {2016},
}

@inproceedings{park_collision_2026,
	title = {Beyond {Collision} {Cones}: {Dynamic} {Obstacle} {Avoidance} for {Nonholonomic} {Robots} via {Dynamic} {Parabolic} {Control} {Barrier} {Functions}},
	shorttitle = {{DPCBF}},
	doi = {10.48550/arXiv.2510.01402},
	urldate = {2025-12-15},
	booktitle = {International {Conference} on {Robotics} and {Automation} ({ICRA})},
	author = {Park, Hun Kuk and Kim, Taekyung and Panagou, Dimitra},
	year = {2026},
}

@article{xu_correctness_2018,
	title = {Correctness {Guarantees} for the {Composition} of {Lane} {Keeping} and {Adaptive} {Cruise} {Control}},
	volume = {15},
	issn = {1558-3783},
	doi = {10.1109/TASE.2017.2760863},
	number = {3},
	urldate = {2025-12-15},
	journal = {IEEE Transactions on Automation Science and Engineering},
	author = {Xu, Xiangru and Grizzle, Jessy W. and Tabuada, Paulo and Ames, Aaron D.},
	year = {2018},
	pages = {1216--1229},
}

@article{parrilo_semidefinite_2003,
	title = {Semidefinite programming relaxations for semialgebraic problems},
	volume = {96},
	issn = {1436-4646},
	doi = {10.1007/s10107-003-0387-5},
	language = {en},
	number = {2},
	urldate = {2025-12-15},
	journal = {Mathematical Programming},
	author = {Parrilo, Pablo A.},
	year = {2003},
	pages = {293--320},
}

@article{altarovici_general_2013,
	title = {A general {Hamilton}-{Jacobi} framework for non-linear state-constrained control problems},
	volume = {19},
	copyright = {© EDP Sciences, SMAI, 2012},
	issn = {1292-8119, 1262-3377},
	doi = {10.1051/cocv/2012011},
	language = {en},
	number = {2},
	urldate = {2025-12-01},
	journal = {ESAIM: Control, Optimisation and Calculus of Variations},
	author = {Altarovici, Albert and Bokanowski, Olivier and Zidani, Hasnaa},
	year = {2013},
	pages = {337--357},
}

@inproceedings{kim_how_2025,
	title = {How to {Adapt} {Control} {Barrier} {Functions}? {A} {Learning}-{Based} {Approach} with {Applications} to a {VTOL} {Quadplane}},
	shorttitle = {How to {Adapt} {Control} {Barrier} {Functions}?},
	doi = {10.48550/arXiv.2504.03038},
	urldate = {2025-08-22},
	booktitle = {{IEEE} {Conference} on {Decision} and {Control} ({CDC})},
	author = {Kim, Taekyung and Beard, Randal W. and Panagou, Dimitra},
	year = {2025},
	pages = {7050--7057},
}

@article{agrawal_reformulations_2025,
	title = {Reformulations of {Quadratic} {Programs} for {Lipschitz} {Continuity}},
	volume = {9},
	language = {en},
	journal = {IEEE Control Systems Letters},
	author = {Agrawal, Devansh R and Lee, Haejoon and Panagou, Dimitra},
	year = {2025},
	pages = {2603--2608},
}

@inproceedings{chen_crowdrobot_2019,
	title = {Crowd-{Robot} {Interaction}: {Crowd}-aware {Robot} {Navigation} with {Attention}-based {Deep} {Reinforcement} {Learning}},
	shorttitle = {{CrowdNav} or {SARL}},
	doi = {10.48550/arXiv.1809.08835},
	urldate = {2025-06-27},
	booktitle = {{IEEE} {International} {Conference} on {Robotics} and {Automation} ({ICRA})},
	author = {Chen, Changan and Liu, Yuejiang and Kreiss, Sven and Alahi, Alexandre},
	year = {2019},
	pages = {6015--6022},
}

@article{brunke_safe_2022,
	title = {Safe {Learning} in {Robotics}: {From} {Learning}-{Based} {Control} to {Safe} {Reinforcement} {Learning}},
	volume = {5},
	issn = {2573-5144},
	shorttitle = {Safe {Learning} in {Robotics}},
	doi = {10.1146/annurev-control-042920-020211},
	language = {en},
	urldate = {2025-03-22},
	journal = {Annual Review of Control, Robotics, and Autonomous Systems},
	author = {Brunke, Lukas and Greeff, Melissa and Hall, Adam W. and Yuan, Zhaocong and Zhou, Siqi and Panerati, Jacopo and Schoellig, Angela P.},
	year = {2022},
	pages = {411--444},
}

@article{kim_visibilityaware_2025,
	title = {Visibility-{Aware} {RRT}* for {Safety}-{Critical} {Navigation} of {Perception}-{Limited} {Robots} in {Unknown} {Environments}},
	volume = {10},
	doi = {10.48550/arXiv.2406.07728},
	number = {5},
	urldate = {2025-02-21},
	journal = {IEEE Robotics and Automation Letters},
	author = {Kim, Taekyung and Panagou, Dimitra},
	year = {2025},
	pages = {4508--4515},
}

@inproceedings{kim_learning_2025,
	title = {Learning to {Refine} {Input} {Constrained} {Control} {Barrier} {Functions} via {Uncertainty}-{Aware} {Online} {Parameter} {Adaptation}},
	doi = {10.48550/arXiv.2409.14616},
	urldate = {2025-02-17},
	booktitle = {{IEEE} {International} {Conference} on {Robotics} and {Automation} ({ICRA})},
	author = {Kim, Taekyung and Kee, Robin Inho and Panagou, Dimitra},
	year = {2025},
	pages = {3868--3875},
}

@inproceedings{choi_robust_2021,
	title = {Robust {Control} {Barrier}–{Value} {Functions} for {Safety}-{Critical} {Control}},
	shorttitle = {{CBVF}},
	doi = {10.1109/CDC45484.2021.9683085},
	urldate = {2024-09-09},
	booktitle = {{IEEE} {Conference} on {Decision} and {Control} ({CDC})},
	author = {Choi, Jason J. and Lee, Donggun and Sreenath, Koushil and Tomlin, Claire J. and Herbert, Sylvia L.},
	year = {2021},
	pages = {6814--6821},
}

@inproceedings{agrawal_safe_2021,
	title = {Safe {Control} {Synthesis} via {Input} {Constrained} {Control} {Barrier} {Functions}},
	shorttitle = {{ICCBF}},
	doi = {10.1109/CDC45484.2021.9682938},
	urldate = {2024-07-02},
	booktitle = {{IEEE} {Conference} on {Decision} and {Control} ({CDC})},
	author = {Agrawal, Devansh R. and Panagou, Dimitra},
	year = {2021},
	pages = {6113--6118},
}

@book{borrelli_predictive_2017,
	title = {Predictive {Control} for {Linear} and {Hybrid} {Systems}},
	isbn = {978-1-107-01688-0},
	language = {en},
	publisher = {Cambridge University Press},
	author = {Borrelli, Francesco and Bemporad, Alberto and Morari, Manfred},
	year = {2017},
}

@article{xiong_discretetime_2023,
	title = {Discrete-{Time} {Control} {Barrier} {Function}: {High}-{Order} {Case} and {Adaptive} {Case}},
	volume = {53},
	copyright = {https://ieeexplore.ieee.org/Xplorehelp/downloads/license-information/IEEE.html},
	issn = {2168-2267, 2168-2275},
	shorttitle = {dt-hocbf},
	doi = {10.1109/TCYB.2022.3170607},
	language = {en},
	number = {5},
	urldate = {2024-04-04},
	journal = {IEEE Transactions on Cybernetics},
	author = {Xiong, Yuhan and Zhai, Di-Hua and Tavakoli, Mahdi and Xia, Yuanqing},
	year = {2023},
	pages = {3231--3239},
}

@inproceedings{singletary_comparative_2021,
	title = {Comparative {Analysis} of {Control} {Barrier} {Functions} and {Artificial} {Potential} {Fields} for {Obstacle} {Avoidance}},
	doi = {10.1109/IROS51168.2021.9636670},
	urldate = {2024-02-07},
	booktitle = {{IEEE}/{RSJ} {International} {Conference} on {Intelligent} {Robots} and {Systems} ({IROS})},
	author = {Singletary, Andrew and Klingebiel, Karl and Bourne, Joseph and Browning, Andrew and Tokumaru, Phil and Ames, Aaron},
	year = {2021},
	pages = {8129--8136},
}

@inproceedings{xiao_control_2019,
	title = {Control {Barrier} {Functions} for {Systems} with {High} {Relative} {Degree}},
	shorttitle = {hocbf},
	doi = {10.1109/CDC40024.2019.9029455},
	urldate = {2023-12-05},
	booktitle = {{IEEE} {Conference} on {Decision} and {Control} ({CDC})},
	author = {Xiao, Wei and Belta, Calin},
	year = {2019},
	pages = {474--479},
}

@inproceedings{ames_control_2019,
	title = {Control {Barrier} {Functions}: {Theory} and {Applications}},
	shorttitle = {{CBF}},
	doi = {10.23919/ECC.2019.8796030},
	booktitle = {European {Control} {Conference} ({ECC})},
	author = {Ames, Aaron D. and Coogan, Samuel and Egerstedt, Magnus and Notomista, Gennaro and Sreenath, Koushil and Tabuada, Paulo},
	year = {2019},
	pages = {3420--3431},
}

\end{document}